\newtheorem{prop}{Proposition}
\let\doendproof\endproof
\renewcommand\endproof{~\hfill\qed\doendproof}
\newcommand{\dl}{\delta}
\newcommand{\ep}{\varepsilon}
\newcommand{\tht}{\theta}
\newcommand{\N}{\mathbb{N}}
\newcommand{\R}{\mathbb{R}}
\newcommand{\mbf}[1]{\mathbf{#1}}
\newcommand{\Jicomp}{J_{W_i, \mbf{x}}}
\newcommand{\Ji}{J_i}
\newcommand{\phxi}{\varphi_{\mbf{x}, i}}
\newcommand{\Rmn}{\R^{m\times n}}
\newcommand{\bfb}{\mbf{b}}
\newcommand{\bfh}{\mbf{h}}
\newcommand{\bfu}{\mbf{u}}
\newcommand{\bfv}{\mbf{v}}
\newcommand{\bfw}{\mbf{w}}
\newcommand{\bfx}{\mbf{x}}
\newcommand{\fth}{f_\theta}
\newcommand{\fith}[1]{f_{#1, \theta_{#1}}}
\newcommand{\fix}[1]{f_{#1, \bfx_{#1}}}
\begin{document}
\title{Training morphological neural networks with gradient descent: some theoretical insights}
\titlerunning{Training morphological networks with gradient descent: theoretical insights}
%
\author{Samy Blusseau\inst{1}\orcidID{0000-0003-0294-8172}}
%
\authorrunning{S. Blusseau}
%
\institute{Mines Paris, PSL University, Centre for mathematical
  morphology (CMM), Fontainebleau, France}
\maketitle              
\begin{abstract}

  Morphological neural networks, or layers, can be a powerful tool to
  boost the progress in mathematical morphology, either on theoretical
  aspects such as the representation of complete lattice operators, or
  in the development of image processing pipelines. However, these
  architectures turn out to be difficult to train when they count more
  than a few morphological layers, at least within popular machine
  learning frameworks which use gradient descent based optimization
  algorithms. In this paper we investigate the potential and
  limitations of differentiation based approaches and back-propagation
  applied to morphological networks, in light of the non-smooth
  optimization concept of Bouligand derivative. We provide insights
  and first theoretical guidelines, in particular regarding
  initialization and learning
  rates.
  
  \keywords{Morphological neural networks \and Nonsmooth optimization
    \and Lattice operators}
  
\end{abstract}

\section{Introduction}
\label{sec:intro}

Morphological neural networks were introduced in the late
1980s~\cite{wilson1989morphological,davidson1990theory}, and have been
revisited in recent
years~\cite{charisopoulos2017perceptron,zhang2019max,mondal2020image,franchi2020deep,kirszenberg2021beyond}. With
the growing maturity of deep learning science, new exciting
perspectives seem to open and give hope for significant breakthroughs.

In image processing, with the development of libraries specialized in
morphological architectures \cite{velasco2020morpholayers}, where
basic as well as advanced operators are implemented, such as
geodesical reconstruction layers \cite{velascoforero2022fixed}, it is
now within reach to train end to end pipelines which include
morphological preprocessing and postprocessing, and to use the know how
of the morphological community to impose topological and geometrical
constraints inside deep networks.

Furthermore, morphological networks can help investigate in practice
the representation theory of lattice operators initiated by Georges
Matheron~\cite{matheron1975random,maragos1989representation,banon1993decomposition}.
Just as the universal approximation theorem for the multi-layer
perceptron, the representation theorem of lattice operators with
families of erosions and anti-dilations, is an existence one and is
asymptotic, but does not provide any algorithm to actually exhibit
such representations. Since these decompositions can be implemented as
morphological layers, we may hope to learn these representations
\emph{from data}.

Yet, the optimization of morphological architectures is still slow and
difficult. Despite the several contributions in this area,
\cite{mondal2020image,franchi2020deep,aouad2022binary}, architectures
including morphological layers are often quite shallow and do not
compete with the state of the art networks for image analysis. On the
one hand, it may be due to the Fr\'echet non differentiability of the
morphological layers, reason for several attempts to replace them by
smooth approximations
\cite{hermary2022learning,kirszenberg2021beyond}. On the other hand,
non smooth operations such as the Rectifier Linear Unit (ReLU) or the
max-pooling, are commonly used in successfully trained architectures,
while smooth morphological ones do not seem to solve all the
optimization issues.

In this paper we investigate the potential and limitations of training
morphological neural networks with differentiation-based algorithms
relying on back-propagation and the chain rule. In
Section~\ref{sec:mnn} we introduce morphological networks, and recall
in Section~\ref{sec:optim} the principles of gradient descent,
back-propagation and chain rule. Section~\ref{sec:bouligand_pc1}
presents the concept of Bouligand derivative, which is suited to
morphological layers. In Section~\ref{sec:optim_bouligand} we expose
the possibilities and issues of this framework within the chain-rule
paradigm, before concluding in Section~\ref{sec:conclusion}.

\section{Morphological networks}
\label{sec:mnn}

There is no universal definition of morphological neural networks, but
most architectures that are called so, are neural networks including
at least a morphological layer. In turn, a morphological layer is one
computing a morphological operation such as a dilation or an erosion,
or sometimes a (weighted) rank filter. In this paper we will focus on
dilation and erosion layers, composed with each other or with other
classical (dense or convolutional) layers.

\textbf{Dilation layers.}
We will call dilation layer a function $\dl_W : \R^n\to\R^m$,
$n, m\in\N^*$, defined by

\begin{equation}
  \label{eq:dilation_layer}
  \dl_W:\mbf{x} = (x_1,\dots, x_n) \mapsto \left(\max\limits_{1\leq k\leq n} x_k + w_{i,k}\right)_{1\leq i\leq m}
\end{equation}
where the $w_{i,k}$ are the real valued coefficients of a matrix
$W\in\R^{m \times n}$, and the parameters (or \emph{weights}) of the
layer. Extended to the complete lattices $\bar{\R}^n$ and
$\bar{\R}^m$, where
$\bar{\R} \coloneqq \R\cup\lbrace -\infty, +\infty\rbrace$, $\dl_W$ is
a shift invariant morphological dilation
\cite{maragos13representations,blusseau2022morphological}. In
practical neural architectures the input and output of a layer are
usually represented as sets of vectors, called feature maps. In such a
setting, each output feature map would be the supremum of dilations
like $\dl_W$, of the input feature maps. By reshaping the set of input
feature maps into one input vector, and the set of output ones into
one output vector, we get the equivalent formulation
\eqref{eq:dilation_layer}, simpler to analyze.

\textbf{Erosion layers.}
Similarly, we will call erosion layer a function
$\ep_W : \R^m\to\R^n$, $n, m\in\N^*$, defined by
\begin{equation}
  \label{eq:erosion_layer}
  \ep_W:\mbf{x} = (x_1,\dots, x_m) \mapsto \left(\min\limits_{1\leq k\leq m} x_k - w_{k,j}\right)_{1\leq j\leq n}.
\end{equation}
Note that the sign ``$-$'' and the transposition ($w_{k,i}$ instead of
$w_{i,k}$) in the definition are meant to make $(\ep_W, \ \dl_W)$ a
morphological adjunction.

\textbf{Morphological networks.} As said earlier, in this paper any
neural network including at least a morphological layer is considered
a morphological network. This includes sequential compositions of
dilations and erosions layers, supremum of erosion layers, infimum of
dilation layers, and composition with classical layers (linear
operators followed by a non-linear activation function). This also
includes anti-dilations and anti-erosions, which are of the kind
$\mbf{x}\mapsto \dl_W(-\mbf{x})$ and $\mbf{x}\mapsto
\ep_W(-\mbf{x})$. Note however that the composition
$\dl_{A} \circ\dl_{B}$ of two dilation layers can be considered as one
dilation layer $\dl_{C}$ where $C\in\R^{m\times n}$ is the max-plus
matrix product of $A\in\R^{m\times p}$ by $B\in\R^{p\times n}$,
\begin{equation}
  \label{eq:mat_product}
  C_{ij} =  \max_{1\leq k \leq p} A_{ik}+ B_{kj}, \;\;\; 1\leq i \leq m,\;\; 1\leq j \leq n.
\end{equation}
Furthermore, the pointwise maximum of $l$ dilation layers
$\dl_{W_1}, \dots, \dl_{W_l}$ (where all $W_i$s are the same size), is
also equivalent to one dilation layer $\dl_{W^*}$ where $W^*$ is the
pointwise maximum the matrices $W_i$s.

Similarly, in theory it is pointless to compose or take the minimum of
erosion layers, since such operators can be represented (and learned)
as one erosion layer.

\section{Optimization with gradient descent}
\label{sec:optim}

Let us consider a classic neural network setting where a function
$f_{\theta}:\R^n \to \R^+$ depending on a parameter
$\theta = [\theta_1, \dots, \theta_L]$ is a composition of $L$
functions
\begin{equation}
  \label{eq:f_theta}
  f_{\theta} \coloneqq f_{L,\theta_L}\circ f_{L-1,\theta_{L-1}} \circ \dots\circ f_{1,\theta_{1}},
\end{equation}
each $f_{k,\theta_{k}}$ depending on its own parameter
$\theta_k\in\R^{p_k}$ and mapping $\R^{n_k}$ to $\R^{n_{k+1}}$, with
$n_1 = n$ and $n_L=1$ (we include the loss function as part of the
last layer). Typically, we would like to find a parameter $\theta$
which minimizes the expectation $\mathbb{E}(f_{\theta}(X))$ where $X$
is a random variable that models the distribution of the data we want
to process\footnote{Recall that $f_\theta$ is real valued since we
  include the loss in the last layer $f_{L,\theta_{L}}$.}. In practice
this can be done by applying $f_{\theta}$ to samples $x_1, \dots, x_N$
of $X$ and iteratively update $\theta\gets \theta + \Delta \theta$ in
a way that decreases the function at the current sample,
$f_{\theta + \Delta\theta}(x_i) \leq f_{\theta}(x_i)$. Hence the
change $\Delta \theta$ that is looked for is a \emph{descent
  direction}.

\vspace{-.3cm}
\subsection{Gradient descent}
\label{sec:gradient_descent}

Where it exists, the gradient of a function $g:\R^n\to\R$ precisely
provides a descent direction. Indeed if $g$ is
Fr\'echet-differentiable\footnote{The Fr\'echet derivative is just the
  usual derivative, which is a linear function, like
  $h\mapsto \langle \nabla g(x), h \rangle$ in \eqref{eq:g_x_h}.} at $x\in\R^n$, then
\begin{equation}
  \label{eq:g_x_h}
    \forall h\in\R^n, \; \forall \eta \geq 0, \;\;\; g\big(x + \eta h\big) = g(x) + \eta \big(\langle \nabla g(x), h \rangle +\epsilon (\eta)\big)
\end{equation}
where $\langle \cdot,\cdot\rangle$ is the inner product in $\R^n$ and
$\epsilon$ is a function that goes to zero when $\eta$ goes to
zero. Hence, if $\nabla g(x)\neq 0$, for $\eta$ sufficiently small
$|\epsilon(\eta)|< \Vert\nabla g(x)\Vert^2$ and therefore
$g\big(x - \eta \nabla g(x)\big) < g(x)$, for which $-\nabla g(x)$ is
called a descent direction of $g$ at $x$. Equation \eqref{eq:g_x_h}
also implies that any $h\in\R^n$ such that
$\langle \nabla g(x), h\rangle\leq 0$ is a descent
direction. Furthermore, it shows $-\nabla g(x)$ is the \emph{steepest}
descent direction: given $\eta>0$ sufficiently small, any unit vector
$v$ verifies
$g\big(x - \eta \frac{\nabla g(x)}{\Vert\nabla g(x)\Vert}\big) \leq g\big(x
+ \eta v\big)$.

These results can be applied to the function
$g:\theta \mapsto f_{\theta}(x)$ for a fixed sample $x$, provided $g$
is a Fr\'echet-differentiable (also called F-differentiable) function
of $\theta$. In that case we will note
$\nabla_{\theta} f_{\theta}(x) \coloneqq \nabla g$.

\subsection{Back propagation and the chain rule}
\label{sec:back_prop}

To compute $\nabla_{\theta} f_{\theta}(x)$, it is sufficient to
compute each $\nabla_{\theta_i} f_{\theta}(x)$, which can also be
noted $\frac{\partial f_{\theta}(x)}{\partial \theta_i}$, and is the
gradient of the function $g_i: \theta_i \mapsto f_{\theta}(x)$, $x$
and the other parameters $\theta_j, j\neq i$, being fixed. Indeed, the
gradient with respect to $\theta$ is the concatenation of the
gradients with respect to the $\theta_i$s,
$\nabla_{\theta} f_{\theta}(x) = [\nabla_{\theta_1} f_{\theta}(x),
\dots, \nabla_{\theta_L} f_{\theta}(x)]$.


To obtain these, the so called ``chain rule'' is applied, involving
the (Fr\'echet) derivative of each layer with respect to its input
variable and its derivative with respect to its parameter. The
derivatives with respect to the input variables tell earlier layers
(i.e. the layers that are closer to the input) how they should change
their output values to eventually decrease the whole function
$f_{\theta}(x)$. They play a role of message passing to earlier
layers. The derivative with respect to a layer's parameter tells how
to update this parameter in order to comply with the instruction
received from later layers (that is, layers closer to the output).

More formally, we can see this in the case of $\fth$ as defined in
\eqref{eq:f_theta}. We note $\bfx_1\coloneqq x$ the input variable of
$\fith{1}$ (and therefore $\fth$), and
$\bfx_{k+1} \coloneqq \fith{k}(\bfx_k)$, $1\leq k\leq L-1$. For fixed
$\tht_k, \bfx_k$, we denote by $\fith{k}^\prime(\bfx_k ; \ \cdot \ )$
and $\fix{k}^\prime(\tht_k ; \ \cdot \ )$ the Fr\'echet derivatives of
the $k$-th layer with respect to its input variable and parameter
respectively. Then the chain rule algorithm can be summarized as
follows (see Figure \ref{fig:backprop}).
\begin{figure}
  \centering
  \includegraphics[width=.8\textwidth]{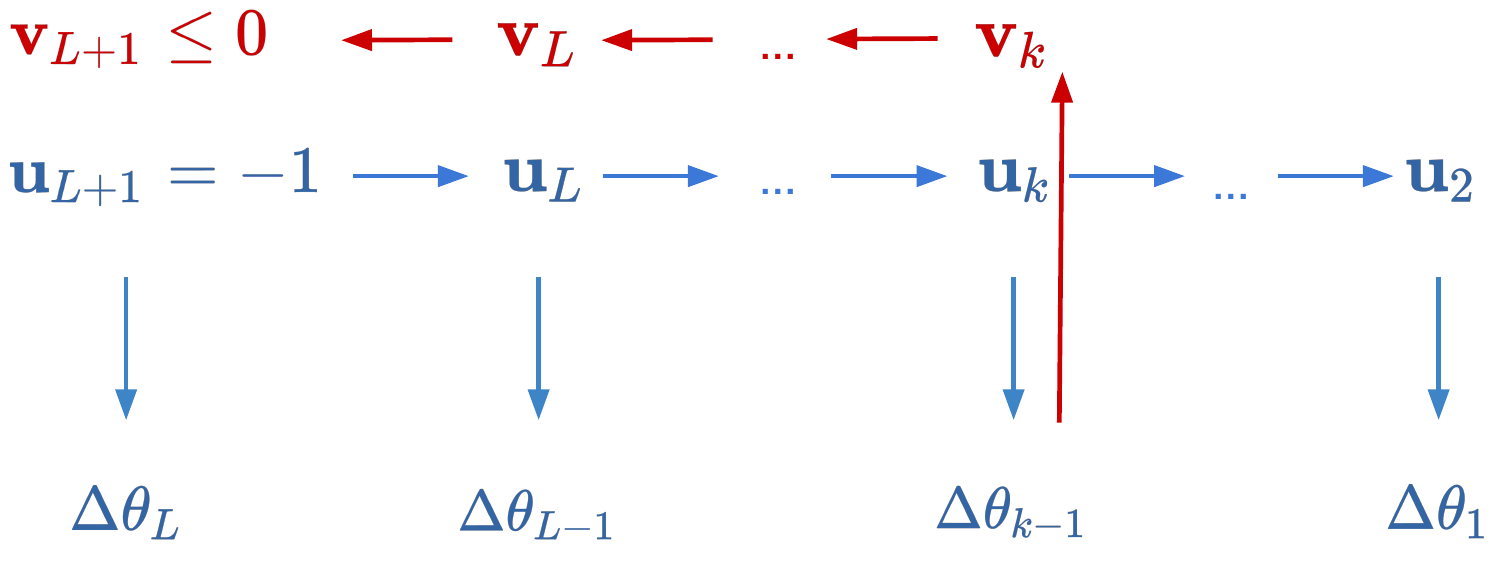}
  \caption{illustration of the chain rule algorithm.}
  \label{fig:backprop}
\end{figure}

\textbf{Initialize the message $\bfu_{L+1}$:} Since we want to
decrease $\fth$, the first target direction to be passed on to layer
$L$ is $\bfu_{L+1} = -1$.

\textbf{Update $\tht_k$, given $\bfu_{k+1}$:} Move $\tht_k$ in the
direction 
\begin{equation}
  \label{eq:Delta_theta_k}
  \Delta\tht_k \coloneqq \arg\max_{\Vert\bfh\Vert=1} \langle \fix{k}^\prime(\tht_k ; \bfh), \bfu_{k+1}\rangle.
\end{equation}

\textbf{Pass on message $\bfu_{k}$ , given $\bfu_{k+1}$:} If $k\geq 2$
pass to layer $k-1$ the target direction
\begin{equation}
  \label{eq:message_u_k}
  \bfu_k \coloneqq \arg\max_{\Vert\bfh\Vert=1} \langle \fith{k}^\prime(\bfx_k ; \bfh), \bfu_{k+1}\rangle.
\end{equation}

Both problems \eqref{eq:Delta_theta_k} and \eqref{eq:message_u_k} are
easily solved using $\fix{k}^*(\tht_k ; \ \cdot \ )$ and
$\fith{k}^*(\bfx_k ; \ \cdot \ )$, the adjoint operators to the
derivatives $\fix{k}^\prime(\tht_k ; \ \cdot \ )$ and
$\fith{k}^\prime(\bfx_k ; \ \cdot \ )$ respectively:
\begin{equation}
  \label{eq:solution_chain_rule}
  \begin{array}{lcr}
    \Delta\tht_k = \frac{\fix{k}^*(\tht_k ; \ \bfu_{k+1} )}{\Vert\fix{k}^*(\tht_k ; \ \bfu_{k+1} )\Vert} \;\;
    & \text{ and }
    & \;\; \bfu_{k}  = \frac{\fith{k}^*(\bfx_k ; \ \bfu_{k+1} )}{\Vert\fith{k}^*(\bfx_k ; \ \bfu_{k+1} )\Vert}.
  \end{array}
\end{equation}
These solutions do not ensure a change of the output value of layer
$k$ in the direction $\bfu_{k+1}$, but they do guarantee
\begin{equation}
  \label{eq:property_chain_rule}
  \begin{array}{lcr}
    \langle \fix{k}^\prime(\tht_k ; \Delta\tht_k), \bfu_{k+1}\rangle \geq 0 \;\;
    & \text{ and }
    & \;\; \langle \fith{k}^\prime(\bfx_k ; \bfu_k), \bfu_{k+1}\rangle \geq 0.
  \end{array}
\end{equation}

\textbf{Answer $\bfv_{k}$ to message $\bfu_{k}$:} Then, when layer
$k-1$ ($k\geq 2$) updates its parameter in the direction
$\Delta\tht_{k-1}$, its output does not move in the target direction
$\bfu_k$, but in the direction
$\bfv_k \coloneqq \fix{k-1}^\prime(\tht_{k-1}; \Delta\tht_{k-1})$,
which ``only'' verifies $\langle \bfv_k, \bfu_k \rangle \geq 0$,
according to \eqref{eq:property_chain_rule}. Therefore, the output of
layer $k$ moves in the direction
$\bfv_{k+1} \coloneqq \fith{k}^\prime(\bfx_{k}; \bfv_k)$ instead of
$\fith{k}^\prime(\bfx_{k}; \bfu_k)$, and so on. The linearity of
$\fith{k}^\prime(\bfx_{k}; \ \cdot \ )$ ensures the crucial following
property
\begin{equation}
  \label{eq:u_k_v_k_u_k1_v_k1}
  \begin{array}{ccc}
    \left\lbrace
    \begin{array}{c}
      \langle \fith{k}^\prime(\bfx_k ; \bfu_k), \bfu_{k+1}\rangle \geq 0\\
      \langle \bfv_k, \bfu_k \rangle \geq 0
    \end{array}\right.
    & \Rightarrow & \langle \bfv_{k+1}, \bfu_{k+1} \rangle  \coloneqq \langle \fith{k}^\prime(\bfx_{k}; \bfv_k), \bfu_{k+1} \rangle \geq 0.
  \end{array}
\end{equation}
Hence, as soon as \eqref{eq:property_chain_rule} and
\eqref{eq:u_k_v_k_u_k1_v_k1} hold for layer $k-1$ and later layers,
the property $\langle \bfv_k, \bfu_k \rangle \geq 0$, triggered by the
update $\Delta\tht_{k-1}$, propagates and eventually yields
$\langle \bfv_{L+1}, \bfu_{L+1} \rangle \geq 0$, i.e.
$\bfv_{L+1} \leq 0$, meaning the output of $f_{\tht}$ is decreased.

This quick reminder of the chain rule mechanism highlights that the
layer derivatives have two goals: optimal message passing and optimal
parameter update based on the message passed by later
layers. Therefore, in the case of non Fr\'echet-differentiable layers,
like dilation and erosion layers, we may investigate if these two
targets, namely properties \eqref{eq:property_chain_rule} and
\eqref{eq:u_k_v_k_u_k1_v_k1}, can still be met somehow. In the next
sections we will see that morphological layers are differentiable in
the more general sense of the Bouligand differentiability, which makes
this notion worth analyzing in the perspective of optimization with
gradient-descent-like algorithms.

\section{The Bouligand derivative}
\label{sec:bouligand_pc1}

The Bouligand derivative has been introduced in the nonsmooth analysis
literature \cite{robinson1987local,scholtes2012introduction}. It is a
directional derivative that provides a first order approximation of
its function in all directions. Formally, given a function
$g:\R^n\to\R^m$ and $x\in\R^n$, if for every $y\in\R^n$ the limit
\begin{equation}
  \label{eq:diectional_limit}
  g^{\prime}(x ; y) \coloneqq \lim_{\alpha\to 0, \alpha>0} \frac{g(x+\alpha y) - g(x)}{\alpha}
\end{equation}
exists, then $g$ is directionally differentiable at $x$ and
$g^{\prime}(x ; . \ )$ is called its directional derivative at $x$. If
additionally for any $h\in\R^n$
\begin{equation}
  \label{eq:first_order_approx}
  g(x+h) = g(x) + g^{\prime}(x; h) + o_{0}(h)
\end{equation}
then $g$ is said to be Bouligand differentiable (or B-differentiable)
at $x$, and $g^{\prime}(x ; . \ )$ is its Bouligand derivative, also
called B-derivative\footnote{Recall that $o_{0}(h)$ denotes
  $h\cdot\epsilon(h)$ where $\epsilon$ is any function that goes to
  zero when $h$ goes to zero.}. If $g$ is B-differentiable at every
$x\in\R^n$, then it simply said B-differentiable.

Fr\'echet differentiablilty implies B-differentiability, but what
makes the latter more general than the former is that the B-derivative
does not need to be a linear function.  If $g^{\prime}(x ; . \ )$ is a
linear function, then $g$ is Fr\'echet differentiable at $x$, and
$g^{\prime}(x ; . \ )$ is its Fr\'echet derivative at that point.

The B-derivative has nice properties similar to the Fr\'echet
derivative, in particular \cite{scholtes2012introduction}:
\begin{itemize}
\item \textbf{Positive homogeneity:}
  $g^\prime(x;\lambda y) = \lambda g^\prime(x ; y)$ for any $\lambda \geq 0$.

\item \textbf{Chain rule:} if $f:\R^n \to \R^m$ and $g:\R^p \to \R^n$
  are continuous and B-differentiable at $x\in\R^p$ and $g(x)$
  respectively, then $f\circ g$ is B-differentiable at $x$ and
  \begin{equation}
    \label{eq:chain_rule}
    \left(f\circ g\right)^\prime (x; y) = f^\prime \left( g(x) ; g^\prime(x; y) \right)
  \end{equation}
\item \textbf{Linearity of $f \mapsto f^\prime(x; .)$:} if
  $f:\R^n \to \R^m$ and $g:\R^n \to \R^m$ are continuous and
  B-differentiable at $x\in\R^n$, then so is $\alpha f + \beta g$ for
  any $\alpha, \beta\in\R$ and
  \begin{equation}
    \label{eq:linearity_b_derivative}
    (\alpha f + \beta g)^\prime(x; y) = \alpha f^\prime(x; y) + \beta g^\prime(x; y).
  \end{equation}
\item \textbf{Derivative of components:} $g : \R^n\to\R^m$ is
  B-differentiable at $x$ if and only if each of its components $g_i:\R^n\to\R$
  is, and in this case
  \begin{equation}
    \label{eq:b_derivative_components}
    g^\prime(x ; y) = \big(g_1^\prime(x ; y), \dots, g_m^\prime(x ; y)\big).
  \end{equation}
\end{itemize}

As we will see, the dilation and erosion layers are continuous and
B-differentiable functions of both their input variables and
parameters, as well as all the usual neural layers. Therefore, a
neural network $f_\theta(x)$ is a continuous and B-differentiable
function of its parameter $\theta\in\R^p$ for a fixed $x$. Noting
$g:\theta \mapsto f_\theta(x)$ we have for $h\in\R^p$ and any
$\eta>0$,
\begin{equation}
  \label{eq:g_theta_h}
  g(\theta+\eta h) = g(\theta) + \eta \big( g^\prime(\theta ; h) + \epsilon(\eta)\big)
\end{equation}
where $\epsilon$ is a function that goes to zero when $\eta$ goes to
zero. Hence, we are left with finding in which direction $h$ we need
to move the parameter $\theta$ in order to ensure
$g(\theta+\eta h) < g(\theta)$ for $\eta$ sufficiently small. Whereas
this was straightforward when
$g^\prime(\theta ; h) = \langle \nabla g(\theta), \ h \rangle$ in
Equation \eqref{eq:g_x_h}, the problem is open when
$g^\prime(\theta ; \cdot)$ is not linear. The purpose of the next
section is to focus on this problem in the case of morphological
neural networks.

\section{Optimization with the Bouligand derivative}
\label{sec:optim_bouligand}

\subsection{Derivatives of the morphological layers}
\label{sec:deriv_morpho_layers}

The Bouligand derivatives of the dilation and erosion layers with
respect to their input values and parameters, are well known in the
nonsmooth optimization literature \cite{scholtes2012introduction},
since they are easy examples of \emph{piecewise affine functions} for
which formulas exist. Here we provide some details of their
computation, that will matter in addressing the problem stated in the
previous section. We focus on the dilation layers, the case of
erosions being analogous.

With the same notations as in Section \ref{sec:mnn}, we denote by
$\mbf{x}\in\R^n$ and $W\in\R^{m\times n}$ the input vector and
parameter matrix of a dilation layer. We will note $\dl_W(\mbf{x})$ to
clarify that we are considering a function of $\mbf{x}$ with fixed
parameter $W$, and $\dl_{\mbf{x}}(W)$ for a function of $W$ with fixed
$\mbf{x}$.

\subsubsection{Derivative with respect to $W$}
\label{sec:derivative_w}

An interesting property of $\dl_{\bfx}$ is that, if we move away from
$W$ in the direction $H\in\Rmn$, with a sufficiently small step
$\eta \geq 0$, $\dl_{\bfx}(W+\eta H)$ shows an exact affine
behaviour. Proposition \ref{prop:affine_interval} below provides a
sufficient and necessary condition on the step $\eta$ for this to
hold. It will also provide the Bouligand derivative of $\dl_{\bfx}$.

Given a fixed $\mbf{x}\in\R^n$ and a variable $W\in\R^{m\times n}$ we
note $\dl_{\mbf{x}}(W) = \big( \phxi(W)\big)_{1\leq i\leq m}$ with
\begin{equation}
  \label{eq:phi_x_i}
  \phxi(W) \coloneqq \max_{1 \leq j\leq n} w_{ij} + x_j.
\end{equation}
Additionally, for each index $1\leq i\leq m$, let us note
\begin{equation}
  \label{eq:J_wi_x}
  \Jicomp \coloneqq \left\lbrace j\in\lbrace 1,\dots, n\rbrace, \; \phxi(W) = w_{ij}+x_j\right\rbrace
\end{equation}
the set of indices where the maximum is achieved in $\phxi(W)$. When
$W$ and $\bfx$ will be clear from the context, we shall just denote it
by
$\Ji$.\\
Let $H\in \Rmn$. Then for each $1\leq i\leq m$, we also introduce the
set
\begin{equation}
  \label{eq:Ki_1}
  K_i \coloneqq \left\lbrace k\in\lbrace 1,\dots, n\rbrace, \; h_{ik} > \max_{j\in\Ji} h_{ij}\right\rbrace.
\end{equation}
Then we have the following result:
\begin{prop}
  \label{prop:affine_interval}
  For fixed $W , H\in \Rmn$ and $\bfx\in\R^n$, let $\phxi$, $\Ji$ and
  $K_i$ as defined by \eqref{eq:phi_x_i}, \eqref{eq:J_wi_x} and
  \eqref{eq:Ki_1} respectively for $1\leq i\leq m$. Let
  \begin{equation}
    \label{eq:epsilon_i_general}
    \epsilon_i = \min_{k\in K_i}\frac{\phxi(W) - (w_{ik}+x_k)}{h_{ik} - \max_{j\in\Ji} h_{ij}}, \;\;\; 1\leq i\leq m,
  \end{equation}
  and $\epsilon = \min_{1\leq i\leq m} \epsilon_i$.
  Then, for any $\eta\in\R^+$ we have
  \begin{equation}
    \label{eq:affine_behaviour_general}
    \eta \in [0, \epsilon] \iff \dl_{\mbf{x}}(W+\eta H) = \dl_{\mbf{x}}(W) +\eta \left( \max_{j\in \Ji} h_{ij} \right)_{1\leq i \leq m}.
  \end{equation}
\end{prop}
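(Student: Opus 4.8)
The plan is to reduce the claim to a coordinatewise statement. Since $\phxi(W)$ depends only on the $i$-th rows of $W$ and $H$, the map $\dl_{\bfx}$ decouples into its $m$ components, and $\epsilon=\min_i\epsilon_i$; so it is enough to show, for each fixed $i$, that $\eta\in[0,\epsilon_i]$ if and only if $\phxi(W+\eta H)=\phxi(W)+\eta\,\tau_i$, where $\tau_i\coloneqq\max_{j\in\Ji}h_{ij}$. Granting this, the identity in \eqref{eq:affine_behaviour_general} holds precisely when it holds in every coordinate, i.e. when $\eta\le\epsilon_i$ for all $i$, which is $\eta\le\epsilon$.

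Fix $i$ and put $\mu_i\coloneqq\phxi(W)$. First I would expand, for $\eta\ge0$,
\[
  \phxi(W+\eta H)=\max_{1\le j\le n}\big(w_{ij}+x_j+\eta h_{ij}\big),
\]
and split the maximum according to whether $j\in\Ji$. For $j\in\Ji$ we have $w_{ij}+x_j=\mu_i$, so the partial maximum over $\Ji$ is $\mu_i+\eta\tau_i$; for $j\notin\Ji$, writing $d_j\coloneqq\mu_i-(w_{ij}+x_j)>0$, the corresponding term is $\mu_i-d_j+\eta h_{ij}$. Hence
\[
  \phxi(W+\eta H)=\mu_i+\max\Big(\eta\tau_i,\ \max_{j\notin\Ji}\big(\eta h_{ij}-d_j\big)\Big),
\]
so the desired affine identity is equivalent to $\eta(h_{ij}-\tau_i)\le d_j$ for every $j\notin\Ji$.

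Next I would note that these inequalities are automatic for the $j\notin\Ji$ with $h_{ij}\le\tau_i$, since then $\eta(h_{ij}-\tau_i)\le0<d_j$; the only binding constraints come from indices with $h_{ij}>\tau_i$, which is exactly the set $K_i$, and for $k\in K_i$ the constraint reads $\eta\le d_k/(h_{ik}-\tau_i)$. Taking the conjunction over $k\in K_i$ yields exactly $\eta\le\epsilon_i$, with the convention that $\epsilon_i=+\infty$ when $K_i=\emptyset$ (no active constraint). This gives ``$\Leftarrow$''; for ``$\Rightarrow$'' one checks that if $\eta>\epsilon_i$ then some $\eta h_{ik}-d_k$ \emph{strictly} exceeds $\eta\tau_i$, so the affine formula genuinely fails rather than merely turning into an inequality. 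Aggregating over $i$ completes the proof, and letting $\eta\to0^+$ (using $\epsilon_i>0$, which holds because each $d_k>0$ and $h_{ik}-\tau_i>0$ for $k\in K_i$) identifies $\big(\tau_i\big)_{1\le i\le m}$ as the Bouligand derivative $\dl_{\bfx}^\prime(W;H)$.

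I do not expect a real obstacle here: the argument is elementary once the maximum is decomposed along $\Ji$. The only points needing a little care are the bookkeeping of the empty-$K_i$ case (so that $\epsilon_i$, hence $\epsilon$, is allowed to be $+\infty$ and the equivalence still reads correctly) and verifying that the forward implication is a genuine ``only if'', i.e. that crossing the threshold $\epsilon$ strictly destroys the affine behaviour.
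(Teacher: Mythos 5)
Your proof is correct and follows essentially the same route as the paper's: reduce to each component $\phxi$, split the maximum over $\Ji$ and its complement, and observe that the affine identity holds exactly when $\eta$ satisfies the constraints coming from indices in $K_i$, i.e.\ $\eta\le\epsilon_i$. Your extra care about the empty-$K_i$ convention ($\epsilon_i=+\infty$) and the strict failure beyond the threshold only makes explicit what the paper leaves implicit.
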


\begin{proof}[Proposition \ref{prop:affine_interval}]
  Let $\eta\in\R^+$, and let us note
  $\bfb\coloneqq \left( \max_{j\in \Ji} h_{ij} \right)_{1\leq i \leq
    m}$. Then $\dl_{\mbf{x}}(W+\eta H) = \dl_{\mbf{x}}(W) +\eta \bfb$
  if and only if $\phxi(W+\eta H) = \phxi(W) +\eta b_i$ for all
  $1\leq i\leq m$.
  Now, the left-hand term writes
  \begin{equation}
    \label{eq:prop1_intermed0}
    \phxi(W+\eta H) = \max\left(
      \max_{j\in\Ji} w_{ij}+x_j+\eta h_{ij}, \max_{k\notin\Ji} w_{ik}+x_k+\eta h_{ik}\right).
  \end{equation}
  Since by definition $\phxi(W) = w_{ij}+x_j$ for any $j\in\Ji$, we get
  \begin{equation}
    \label{eq:prop1_intermed}
    \phxi(W+\eta H) = \max\left( \phxi(W)
      + \eta b_i, \; \max_{k\notin\Ji} w_{ik}+x_k+\eta h_{ik}\right).    
  \end{equation}
  Therefore $\phxi(W+\eta H) = \phxi(W) +\eta b_i$ if and only if for
  any $k\notin \Ji$, $\phxi(W) + \eta b_i\geq w_{ik}+x_k+\eta h_{ik}$
  which is equivalent to $\eta\leq \epsilon_i$, and the result
  follows.
\end{proof}

Given the definitions of Section~\ref{sec:bouligand_pc1},
Proposition~\ref{prop:affine_interval} readily shows that
$\dl_{\mbf{x}}$ has a directional derivative in any direction. By a
similar reasoning, one can show that
$\dl_{\mbf{x}}(W+H) = \dl_{\mbf{x}}(W) + \left( \max_{j\in \Ji} h_{ij}
\right)_{1\leq i \leq m}$ as soon as all $|h_{ij}|$ are small
enough\footnote{Take for example
  $\Vert H\Vert_\infty < \frac{1}{2} \min_{1\leq i\leq m}\min_{k\notin \Ji}
  \phxi(W) - (w_{ik} + x_k)$.}. Therefore $\dl_{\mbf{x}}$ is Bouligand
differentiable everywhere and its B-derivative is
\begin{equation}
  \label{eq:deriv_dil_x_W}
  \dl_{\mbf{x}}^\prime(W ; H) = \left( \max_{j\in \Ji} h_{ij} \right)_{1\leq i \leq m}.
\end{equation}
Furthermore, with the notations of Proposition
\ref{prop:affine_interval},
\begin{equation}
  \label{eq:delta_x_W_affine}
  \eta \in [0, \epsilon] \iff \dl_{\mbf{x}}(W+\eta H) = \dl_{\mbf{x}}(W) +\eta\dl_{\mbf{x}}^\prime(W ; H).
\end{equation}

It appears that for any $W$ such that $\Ji = \lbrace j_i\rbrace$ is a
singleton for each $1\leq i\leq m$ (the maximum is achieved only once
for each $\phxi$), $\dl_{\mbf{x}}^\prime(W ; H)$ is a linear function
since
$\max_{j\in \Ji} h_{ij} = h_{ij_i} = \langle H_{i,:}, e_{j_i}\rangle$,
where $e_{j_i}$ is the vector with a one at index $j_i$ and zeros
elsewhere. Hence in that case $\dl_{\mbf{x}}$ is Fr\'echet
differentiable. One can check\footnote{Indeed the set of matrices for
  which the maximum in $\phxi(W)$ is achieved more than once, for a
  given $i$, is of zero Lebesgue measure.} that this happens for
almost every $W$.

\subsubsection{Derivative with respect to $\mbf{x}$}
\label{sec:derivative_x}

With the same approach as previously, one can show that $\dl_W$ is
B-differentiable with respect to $\mbf{x}$, and its B-derivative is,
for all $\mbf{h}\in\R^n$,
\begin{equation}
  \label{eq:deriv_dil_W_x}
  \dl_{W}^\prime(\mbf{x} ; \mbf{h}) = \left( \max_{j\in \Ji} h_j \right)_{1\leq i \leq m},
\end{equation}
Furthermore, for a fixed $\bfh\in\R^n$, changing only $h_{ij}$ and
$h_{ik}$ for $h_{j}$ and $h_{k}$ in \eqref{eq:Ki_1} and
\eqref{eq:epsilon_i_general}, it comes that for any $\eta\in\R^+$
\begin{equation}
  \label{eq:dil_W_x_h}
  \eta \in [0, \epsilon] \iff \dl_{W}(\mbf{x} + \eta\mbf{h}) = \dl_{W}(\mbf{x}) + \eta\dl_{W}^\prime(\mbf{x} ; \mbf{h}).
\end{equation}
Again, $\dl_{W}^\prime(\mbf{x} ; \mbf{h})$ is a linear function of
$\mbf{h}$ as soon as the maximum is achieved only once in
$\lbrace w_{ij} + x_j, 1\leq j\leq n \rbrace$, i.e.
$\Ji = \lbrace j_i \rbrace $, for each $1\leq i \leq m$. In that case
$\max_{j\in \Ji} h_j = h_{j_i} = \langle \mbf{h}, e_{j_i} \rangle$,
hence $\dl_{W}^\prime(\mbf{x} ; \mbf{h}) = E \mbf{h}$, where $E$ is
the matrix whose rows are the $e_{j_i}$s. Again, this case holds for
almost every $\mbf{x}$.

\subsection{Updating the parameters}
\label{sec:updating_params}

\subsubsection{Problem setting.}
\label{sec:pbm_setting_param_update}

Let us focus on the update of the parameter $W$ of the dilation layer
$\dl_W : \R^n\to\R^m$. In the context of the chain rule, we assume
that later layers (those closer to the output) have transmitted an
instruction direction $\mbf{u}\in\R^m$, and $\dl_W$ is supposed to
modify its parameter $W \gets W + \Delta W $ so that
$\dl_{\mbf{x}}(W+\Delta W)-\dl_{\mbf{x}}(W)$ maximizes the inner
product with $\mbf{u}$. More formally, just as in
Eq. \eqref{eq:Delta_theta_k}, we want to solve
\begin{equation}
  \label{eq:Delta_W}
  \Delta W = \arg\max_{\Vert H\Vert = 1} \left\langle \dl_{\bfx}^\prime(W; H), \bfu \right\rangle,
\end{equation}
where $\Vert\cdot\Vert$ denotes the Frobenius norm and, this time, we
consider the Bouligand derivative \eqref{eq:deriv_dil_x_W} computed
earlier. The reason for which we are faced with the same problem as
with Fr\'echet derivative, is that the Bouligand one also provides the
first order approximation \eqref{eq:first_order_approx}. Without loss
of generality, we assume $\Vert\bfu\Vert = 1$.

Solving \eqref{eq:Delta_W} does not seem straightforward but an
attempt could start by noticing that
\begin{equation}
  \label{eq:upper_bound_deriv_dil_x_W_H}
  \Vert\dl_{\bfx}^\prime(W; H)\Vert^2 = \sum_{i=1}^m \left(\max_{j\in \Ji} h_{ij}\right)^2\leq \sum_{i=1}^m\sum_{j=1}^n h_{ij}^2 = \Vert H\Vert^2
\end{equation}
hence $\Vert\dl_{\bfx}^\prime(W; H)\Vert \leq 1$ for $\Vert H\Vert=1$, therefore
$\left\langle \dl_{\bfx}^\prime(W; H), \bfu \right\rangle \leq \Vert u\Vert =
1$. This upper-bound is obviously achieved when
$\dl_{\bfx}^\prime(W; \ \cdot \ )$ is linear (i.e. $\dl_{\bfx}$ is
F-differentiable at $W$), and for $H$ such that $h_{ij_i}=u_i$,
$1\leq i\leq m$, and zero elsewhere, where we recall that in the
F-differentiable case, $j_i$ is the only index achieving the maximum
in $\phxi(W)$, i.e.  $\Ji = \lbrace j_i\rbrace$. Indeed in that case,
$\Vert H\Vert=1$ and $\dl_{\bfx}^\prime(W; H) = \mbf{u}$.

\subsubsection{Proposition of candidates $\Delta W$.}
\label{sec:proposition_candidates}

In the non-F-differentiable case (i.e. when at least one $\Ji$ has
more than one element), without analytically solving
\eqref{eq:Delta_W}, we can at least propose decent candidates,
inspired by the F-differentiable case. Let
$I^+ \coloneqq \lbrace 1\leq i\leq m, u_i \geq 0 \rbrace$,
$I^- \coloneqq \lbrace 1\leq i\leq m, u_i < 0 \rbrace$ and for
$1\leq i\leq m$ let us note $p_i\coloneqq |\Ji|$ the number of indices
achieving the maximum in $\phxi(W)$. To make $\dl_{\bfx}^\prime(W; H)$
similar to $\mbf{u}$ while keeping $\Vert H\Vert=1$, we propose:
\begin{itemize}
\item For $i\in I^+$, set $h_{ij_0}=u_i$ for any \emph{one}
  $j_0\in\Ji$, and zero for $j\neq j_0$
\item For $i\in I^-$, set $h_{ij}=\frac{u_i}{\sqrt{p_i}}$ for all
  $j\in\Ji$, and zero for $j\notin\Ji$.
\end{itemize}
Any such $H$ verifies $\Vert H\Vert=1$ and
\begin{equation}
  \label{eq:inner_product_proposition}
  \langle \dl_{\bfx}^\prime(W; H), \bfu \rangle = \sum_{i\in I^+}u_i^2 + \sum_{i\in I^-}\frac{u_i^2}{\sqrt{p_i}} = 1 - \sum_{i\in I^-}\left(1-\frac{1}{\sqrt{p_i}}\right)u_i^2.
\end{equation}
We see that this quantity gets closer to one as the $p_i$ get closer
to one, and we recover the optimal bound in the F-differentiable case,
which corresponds to $p_i=1$ for all $1\leq i\leq m$, or when all
$u_i\geq 0$. Furthermore, we have the lower bound
$\langle \dl_{\bfx}^\prime(W; H), \bfu \rangle \geq \frac{1}{\sqrt{n}}
> 0$, which is the left hand part of property
\eqref{eq:property_chain_rule}. Note that numerical experiments show
that better $H$ can be found (for example in the neighbourhood of the
proposed ones).

\subsubsection{Choosing the learning rate.}
\label{sec:learning_rate_param_update}
Recall that solving problem \eqref{eq:Delta_W} is relevant as long as
a good first order approximation
$\dl_{\bfx}(W+\eta H) \approx \dl_{\bfx}(W) + \eta\dl_{\bfx}^\prime(W;
H)$ holds, since only in this case does the parameter update ensure a
change in the output value towards a descent direction. Proposition
\ref{prop:affine_interval} provides the exact range of learning rates
for which this approximation is an equality. For our proposed $H$, it
holds if and only if $\eta\in[0, \epsilon]\cap\R^+$, with
  \begin{equation}
  \label{eq:epsilon_affine}
  \epsilon = \min_{i\in I^-} \frac{\eta_i \sqrt{p_i}}{|u_i|}
\end{equation}
where
$\eta_i \coloneqq \min\limits_{k\notin \Ji} \phxi(W) - (w_{ik} + x_k) =
\phxi(W) - \max\limits_{k\notin \Ji} (w_{ik} + x_k)$.

\subsection{Message passing}
\label{sec:message_passing}

\subsubsection{Problem setting}
\label{sec:pbm_setting_message_passing}

For the message passing, we are first faced with the same problem as
\eqref{eq:message_u_k} for F-differentiable functions, but with the
B-derivative. Namely, given the received target direction $\bfu$, we
want to find the best update direction $\Delta\bfx$ for $\bfx$,
\begin{equation}
  \label{eq:message_v_dil}
  \Delta\bfx = \arg\max_{\Vert\bfh\Vert = 1} \left\langle \dl_{W}^\prime(\bfx; \bfh), \bfu \right\rangle.  
\end{equation}
Assuming we can find a good enough $\bfh$, which would ensure
$\left\langle \dl_{W}^\prime(\bfx; \bfh), \bfu \right\rangle \geq 0$,
i.e. the right hand part of property \eqref{eq:property_chain_rule},
then we have another problem, which is to guarantee property
\eqref{eq:u_k_v_k_u_k1_v_k1}: that if
$\langle \bfv, \bfh\rangle \geq 0$ for some $\bfv$, then
$\left\langle \dl_{W}^\prime(\bfx; \bfv), \bfu \right\rangle \geq
0$. To make sure the chain rule works, we could therefore focus on the
problem
\begin{equation}
  \label{eq:problem_message}
  \begin{array}{cc}
    \text{Find } \bfh\in\R^n \text{ such that}
    &
      \left\lbrace
      \begin{array}{c}
        \Vert\bfh\Vert = 1\\
        \left\langle \dl_{W}^\prime(\bfx; \bfh), \bfu \right\rangle \geq 0\\ 
        \forall \bfv\in\R^n,\;\; \langle \bfv, \bfh\rangle \geq 0 \Rightarrow \left\langle \dl_{W}^\prime(\bfx; \bfv), \bfu \right\rangle \geq 0.       
      \end{array}\right.
  \end{array}
\end{equation}

\subsubsection{Proposition of candidates $\Delta \bfx$}
\label{sec:proposition_candidates_delta_x}

Recall that
$\dl_{W}^\prime(\bfx; \bfh) = \left( \max_{j\in \Ji} h_j
\right)_{1\leq i \leq m}$, hence contrary to the case of parameter
update (Section \ref{sec:updating_params}), the same $h_j$ can
contribute to different $\Ji$, which makes a heuristic construction of
$\bfh$ much more complicated. One exception is the case where the sets
$\Jicomp$ are pairwise disjoint, as with the max-pooling layer with
strides, for which the same kind of construction as in Section
\ref{sec:updating_params} can be done. However, this guarantees only
the first two conditions of \eqref{eq:problem_message}, but we cannot
say much about the last one.

At this stage we have no provable solution for
\eqref{eq:problem_message} except, obviously, in the F-differentiable
case, where each $\Ji$ is a singleton $\lbrace j_i\rbrace$. In that
case, as presented in Section~\ref{sec:derivative_x},
$\dl_{W}^\prime(\mbf{x} ; \mbf{h}) = E \mbf{h}$, where $E$ is the
$m\times n$ matrix whose rows are the $e_{j_i}$s, each $e_{j_i}$ being
the vector with a one at index $j_i$ and zeros elsewhere. Hence the
solution of \eqref{eq:message_v_dil}, and a solution of
\eqref{eq:problem_message}, is $\bfh = \frac{E^T\bfu}{\Vert E^T\bfu\Vert}$ if
$E^T\bfu\neq 0$, and any unit vector $\bfh$ otherwise.

Therefore we propose as update candidate, one that generalizes the
F-differentiable case, namely $\bfh = \frac{E^T\bfu}{\Vert E^T\bfu\Vert}$ but
with $E$ the matrix whose $i$-th row is
$E_{i,:} = \sum_{j\in\Ji} e_{j}$. Numerical experiments show that this
proposition can sometimes violate the last two conditions of
\eqref{eq:problem_message}, but often behaves well.

\subsubsection{Choosing the learning rate}
\label{sec:learning_rate_var_update}

Hoping that the chosen $\bfh$ fulfills \eqref{eq:problem_message}, we
make the best of it by choosing a learning rate ensuring the first
order equality \eqref{eq:dil_W_x_h}. Hence once again we follow the
construction inspired by Proposition \ref{prop:affine_interval}. The
choice of $\bfh = \frac{E^T\bfu}{\Vert E^T\bfu\Vert}$ yields no
simplification of the expression of $\epsilon$.

\subsection{The convolutional case}
\label{sec:convolutional_case}

The definitions \eqref{eq:dilation_layer} and \eqref{eq:erosion_layer}
cover translation invariant dilations and erosions, as soon as
$W \in\R^{n\times n}$ is a Toeplitz matrix.
However, in Section \ref{sec:updating_params}, we assumed no
``shared weights'', i.e. each row of $W$ was considered independent
from the others, which allowed an easy choice for the parameter
update.

To model the constraint on $W$ due to translation invariance, we
assume $\dl_W$ is represented by a vector $\bfw \in\R^p$, $p\leq n$,
and the input variable $\bfx\in\R^n$ is now seen as a matrix
$X\in\R^{n\times p}$ containing $n$ blocks of length $p$. The dilation
now writes
\begin{equation}
  \label{eq:dilation_layer_conv}
  \dl_{\bfw}(X) = \dl_{X}(\bfw) \coloneqq \left(\max_{1\leq j\leq p} x_{ij} + w_j\right)_{1\leq i\leq n}.
\end{equation}
Unfortunately, we see that even for the parameter update, which was
rather favorable in the ``dense'' layer case of Section
\ref{sec:updating_params}, we are in the same situation as in the
message passing of Section \ref{sec:message_passing}, in the sense
that finding good candidate for $\Delta\bfw$ is as difficult as
solving \eqref{eq:message_v_dil}. We would therefore apply the same
heuristics, i.e. $\bfh = \frac{E^T\bfu}{\Vert E^T\bfu\Vert}$, where
$E_{ij}=1$ if $j$ achieves the maximum in
$\max_{1\leq j\leq p} x_{ij} + w_j$ and zero elsewhere. Concerning the
learning rate, \eqref{eq:dil_W_x_h} holds.

\subsection{Practical consequences}
\label{sec:practical_consequences}

\textbf{Position in the network, dense or convolutional layer.} We
saw that the chain rule mechanism is not guaranteed with morphological
layers because of uncertainties in the message passing in general, and
even in the parameter update for convolutional operators. Therefore,
we expect better performance as a morphological layer is closer to the
input of the network, and even more so if it is a dense
layer. Typically, starting a neural pipeline with a dense dilation or
erosion is the most favorable case with the update and learning rate
propositions of Section \ref{sec:updating_params}. Furthermore, if
each morphological layer is seen as a noisy message transmitter, then
it is expected that many such layers in the same network may be hard
to train with the chain rule paradigm.\\
\textbf{Initialization} In both the dense and convolutional cases,
according to our propositions or even in the F-differentiable case, a
parameter coefficient is not updated if it does not contribute to a
maximum.  In the dense case, $w_{ij}$ is not modified if $j\notin \Ji$,
and in the convolutional one, $w_j$ remains unchanged if $j\notin \Ji$
for all $i$. In particular, if such coefficient is moved to $-\infty$,
it will never be updated anymore. Now, consider for example that if
the input variable $\bfx$ has values in $[0, 1]$ and at least one
weight $w_{ij_1} \geq 0$, then the closer another weight $w_{ij_2}$,
on the same line, will be to $-1$ the less likely it will be to achieve
the maximum, and $w_{ij_2}\leq -1$ is equivalent to
$w_{ij_2} = -\infty$. Therefore it seems preferable to initialize the
parameters with non-negative values (typically, zero if input values
in $[0, 1]$). Then, the proposed adaptive learning rates should avoid
a divergence of weights to values from where they cannot come back.

\section{Conclusion}
\label{sec:conclusion}
In this paper we investigated the optimization of morphological layers
based on the Bouligand derivative and the chain rule. We showed that
despite the first order approximation of the B-derivative, its
non-linearity makes morphological layers noisy message transmitter in
the chain rule, where they are not F-differentiable. We clearly stated
the problems to overcome in order to make this framework compatible
with the chain rule. We also provided insights regarding the choice of
the learning-rate for these layers, which seems much clearer than with
classic layers. Future work will deal with addressing the stated
problems and show the experimental consequences of the theoretical
results presented here.

\section*{Acknowledgments}
\label{acknowledgments}
I would like to thank Fran\c{c}ois Pacaud and Santiago Velasco-Forero for
fruitful discussions on this topic.

\bibliographystyle{splncs04}
\bibliography{refs}

\begin{thebibliography}{10}
\providecommand{\url}[1]{\texttt{#1}}
\providecommand{\urlprefix}{URL }
\providecommand{\doi}[1]{https://doi.org/#1}

\bibitem{aouad2022binary}
Aouad, T., Talbot, H.: Binary morphological neural network. In: 2022 IEEE
  International Conference on Image Processing (ICIP). pp. 3276--3280 (2022)

\bibitem{banon1993decomposition}
Banon, G.J.F., Barrera, J.: Decomposition of mappings between complete lattices
  by mathematical morphology, part {I}. general lattices. Signal Processing
  \textbf{30}(3),  299--327 (1993)

\bibitem{blusseau2022morphological}
Blusseau, S., Velasco-Forero, S., Angulo, J., Bloch, I.: Morphological
  adjunctions represented by matrices in max-plus algebra for signal and image
  processing. In: DGMM. pp. 206--218. Springer (2022)

\bibitem{charisopoulos2017perceptron}
Charisopoulos, V., Maragos, P.: Morphological perceptrons: Geometry and
  training algorithms. In: ISMM. pp. 3--15. Springer (2017)

\bibitem{davidson1990theory}
Davidson, J.L., Ritter, G.X.: Theory of morphological neural networks. In:
  Digital Optical Computing II. vol.~1215, pp. 378--389 (1990)

\bibitem{franchi2020deep}
Franchi, G., Fehri, A., Yao, A.: Deep morphological networks. Pattern
  Recognition  \textbf{102},  107246 (2020)

\bibitem{hermary2022learning}
Hermary, R., Tochon, G., Puybareau, {\'E}., Kirszenberg, A., Angulo, J.:
  Learning grayscale mathematical morphology with smooth morphological layers.
  Journal of Mathematical Imaging and Vision  \textbf{64}(7),  736--753 (2022)

\bibitem{kirszenberg2021beyond}
Kirszenberg, A., Tochon, G., Puybareau, {\'E}., Angulo, J.: Going beyond
  p-convolutions to learn grayscale morphological operators. In: DGMM. pp.
  470--482. Springer (2021)

\bibitem{maragos1989representation}
Maragos, P.: A representation theory for morphological image and signal
  processing. IEEE Transactions on Pattern Analysis and Machine Intelligence
  \textbf{11}(6),  586--599 (1989)

\bibitem{maragos13representations}
Maragos, P.: Chapter {T}wo - {R}epresentations for morphological image
  operators and analogies with linear operators. Advances in Imaging and
  Electron Physics, vol.~177, pp. 45 -- 187. Elsevier (2013)

\bibitem{matheron1975random}
Matheron, G.: Random sets and integral geometry. Wiley, New York (1975)

\bibitem{mondal2020image}
Mondal, R., Dey, M.S., Chanda, B.: Image restoration by learning morphological
  opening-closing network. Mathematical Morphology-Theory and Applications
  \textbf{4}(1),  87--107 (2020)

\bibitem{robinson1987local}
Robinson, S.M.: Local structure of feasible sets in nonlinear programming, Part
  III: Stability and sensitivity. Springer (1987)

\bibitem{scholtes2012introduction}
Scholtes, S.: Introduction to piecewise differentiable equations. Springer
  (2012)

\bibitem{velasco2020morpholayers}
Velasco-Forero, S.: Morpholayers (2020)

\bibitem{velascoforero2022fixed}
Velasco-Forero, S., Rhim, A., Angulo, J.: {Fixed Point Layers for Geodesic
  Morphological Operations}. In: {BMVC}. London, United Kingdom (Nov 2022)

\bibitem{wilson1989morphological}
Wilson, S.S.: Morphological networks. In: Visual Communications and Image
  Processing IV. vol.~1199, pp. 483--496 (1989)

\bibitem{zhang2019max}
Zhang, Y., Blusseau, S., Velasco-Forero, S., Bloch, I., Angulo, J.: Max-plus
  operators applied to filter selection and model pruning in neural networks.
  In: ISMM. pp. 310--322. Springer (2019)

\end{thebibliography}

\end{document}